\def\BibTeX{{\rm B\kern-.05em{\sc i\kern-.025em b}\kern-.08em
    T\kern-.1667em\lower.7ex\hbox{E}\kern-.125emX}}
\newtheorem{theorem}{Theorem}
\newtheorem{corollary}[theorem]{Corollary}
\newtheorem{proposition}{Proposition}
\newtheorem{remark}{Remark}
\newtheorem{example}{Example}
\newtheorem{mytheorem}{Theorem}[]      % Create new theorem called 'Theorem'.
\newenvironment{mycolortheorem}[1][]{%        % Create new environment which wraps our Theorem into a tcolorbox.
   \begin{tcolorbox}[colback=blue!5!white,%     Background color.
      width=\dimexpr\linewidth\relax,%     Allow your box to be bigger than \linewidth ...
      enlarge left by=0pt,%                    ... in order to have the text properly aligned. ...
      enlarge right by=-5pt,%                   ... Note that boxsep = -enlargeLeft = -enlargeRight = 0.5*enlargement of width. ...
      boxsep=5pt,%                              ... This is necessary to keep everything good looking.
      left=0pt,%                                Avoid extra space on the left, ...
      right=0pt,%                               ... right, ...
      top=0pt,%                                 ... top, ...
      bottom=0pt,%                              ... and bottom.
      arc=0pt,%                                 Corners not rounded.
      boxrule=0pt,%                             No boxrule.
      colframe=white]{}{}%                      Make rest of the boxrule invisible.
      \ifstrempty{#1}{%                         If you didn't specify the optional argument of Theorem ...
         \begin{mytheorem}%                     ... then open a normal Theorem ...
      }{%                                       ... else ...
         \begin{mytheorem}[#1]%                 ... open a Theorem and use the optional argument.
      }%
}{%
      \end{mytheorem}%                          Close every environment.
   \end{tcolorbox}%
}
\newcommand{\pushright}[1]{\ifmeasuring@#1\else\omit\hfill$\displaystyle#1$\fi\ignorespaces}
\newcommand{\dotminus}{\mathbin{\text{\@dotminus}}}
\newcommand{\@dotminus}{%
  \ooalign{\hidewidth\raise1ex\hbox{.}\hidewidth\cr$\m@th-$\cr}%
}
\title{\LARGE\textbf{Optimal Multimarginal Schr\"{o}dinger Bridge: Minimum Spanning Tree over Measure-valued Vertices
}}
\author{Georgiy A. Bondar, and Abhishek Halder% <-this % stops a space
\thanks{Georgiy A. Bondar is with the Department of Applied Mathematics, University of California Santa Cruz, CA 95064, USA, {\tt\footnotesize gbondar@ucsc.edu}.}
\thanks{Abhishek Halder is with the Department of Aerospace Engineering, Iowa State University, Ames, IA 50011, USA, {\tt\footnotesize{ahalder@iastate.edu}}.}
\thanks{This research was partially supported by NSF award 2111688.}
}
\begin{document}
\bstctlcite{IEEEexample:BSTcontrol}
\maketitle
\thispagestyle{empty}
\pagestyle{empty}

%%%%%%%%%%%%%%%%%%%%%%%%%%%%%%%%%%%%%%%%%%%%%%%%%%%%%%%%%%%%%%%%%%%%%%%%%%%%%%%%
\begin{abstract}
    The Multimarginal Schr\"{o}dinger Bridge (MSB) finds the optimal coupling among a collection of random vectors with known statistics and a known correlation structure. In the MSB formulation, this correlation structure is specified \emph{a priori} as an undirected connected graph with measure-valued vertices. In this work, we formulate and solve the problem of finding the optimal MSB in the sense we seek the optimal coupling over all possible graph structures. We find that computing the optimal MSB amounts to solving the minimum spanning tree problem over measure-valued vertices. We show that the resulting problem can be solved in two steps. The first step constructs a complete graph with edge weight equal to a sum of the optimal value of the corresponding bimarginal SB and the entropies of the endpoints. The second step solves a standard minimum spanning tree problem over that complete weighted graph. Numerical experiments illustrate the proposed solution.   
\end{abstract}

\section{Introduction}\label{sec:Intro}
\noindent\textbf{Multimarginal Schr\"{o}dinger bridge (MSB).} The (graph-structured) MSB is a probabilistic generative model with maximum likelihood optimality guarantee. Specifically, consider a known collection of $s\in\mathbb{N}_{\geq 2}$ measure-valued snapshots, i.e., probability vectors $\bm{\mu}_{1}\in\Delta^{n_{1} - 1}, \hdots, \bm{\mu}_{s}\in\Delta^{n_{s} - 1}$, where the probability simplex $\Delta^{n_{\sigma}-1}:=\{\bm{x}\in\mathbb{R}^{n_{\sigma}}_{\geq 0}\mid \langle\bm{1},\bm{x}\rangle = 1\}$, $n_{\sigma}\in\mathbb{N}$ $\forall\sigma\in \llbracket s\rrbracket:=\{1,\hdots,s\}$, the symbol $\bm{1}$ denotes the all-ones vector, and $\langle\cdot,\cdot\rangle$ is the Hilbert-Schmidt inner product.

Now, take these snapshots to be the vertex set $\mathcal{V}:=\{\bm{\mu}_\sigma\}_{\sigma\in\llbracket s\rrbracket}$ of an \emph{undirected connected} graph $\mathcal{G}=(\mathcal{V},\mathcal{E})$, where $\mathcal{E}$ is the edge set. With $\otimes$ denoting the tensor product, let $\Pi$ be the set of all coupling tensors with marginals $\mathcal{V}$, i.e.,
\begin{align}
\Pi(\mathcal{V})\!\! = &\Pi\left(\bm{\mu}_{1},\hdots,\bm{\mu}_{s}\right) := \{\bm{M}\in\mathbb{R}^{n_{1}}_{\geq 0}\otimes\hdots\otimes\mathbb{R}^{n_{s}}_{\geq 0}\mid \nonumber\\
&\qquad\qquad\qquad{\mathrm{proj}}_{\sigma}\left(\bm{M}\right)=\bm{\mu}_{\sigma}\:\forall\sigma\in\llbracket s\rrbracket\},
\label{defPi}    
\end{align}
where ${\mathrm{proj}}_{\sigma}:\mathbb{R}^{n_{1}}_{\geq 0}\otimes\hdots\otimes\mathbb{R}^{n_{s}}_{\geq 0}\mapsto\mathbb{R}^{n_{\sigma}}_{\geq 0}$, and is given componentwise as
\begin{align}
\left(\operatorname{proj}_\sigma(\boldsymbol{M})\right)_r=\!\!\!\!\sum_{i_1, \ldots, i_{\sigma-1}, i_{\sigma+1}, \ldots, i_s} \!\!\!\!\!\!\boldsymbol{M}_{i_1, \ldots, i_{\sigma-1}, r, i_{\sigma+1}, \ldots, i_s}.
\label{defProj}    
\end{align}
In \eqref{defProj} and throughout, we use the square braces $\left[\cdot\right]$ to denote components of tensors (boldfaced capital letters) and matrices (unboldfaced capital), and parentheses $(\cdot)$ to denote components of vectors (boldfaced lowercase).

To compare candidate couplings $\bm{M}$, we define a ground cost tensor $\bm{C}\in\mathbb{R}^{n_{1}}_{\geq 0}\otimes\hdots\otimes\mathbb{R}^{n_{s}}_{\geq 0}$ that decouples along the edges of $\mathcal{G}$. Intuitively, entries of $\bm{C}$ encode the costs of transporting unit amount of mass along the edges of $\mathcal{G}$. 

To exemplify the dependence of the ground cost tensor $\bm{C}$ on the graph structure, consider when $\mathcal{G}$ is a \emph{path}. Then $\sigma$ indexes a direction (e.g., of time, a spatial dimension) and
\begin{align}
[\bm{C}_{i_\mathcal{V}}] = [\bm{C}_{i_1,\dots,i_s}] = \displaystyle\sum_{\sigma=1}^{s-1}[(C_{\sigma})_{i_\sigma,i_{\sigma+1}}]\:,
\label{CforPath}    
\end{align}
% \begin{align}
% \bm{C}\left(\bm{x}_{1},\hdots,\bm{x}_{s}\right) = \displaystyle\sum_{\sigma=1}^{s-1}c_{\sigma}\left(\bm{x}_{\sigma},\bm{x}_{\sigma + 1}\right),
% \label{CforPath}    
% \end{align}
where the matrices $C_{\sigma}$ are given componentwise as $[(C_{\sigma})_{i_\sigma,i_{\sigma+1}}] := c_{\sigma}\left(\bm{x}_{\sigma},\bm{x}_{\sigma + 1}\right)$, a pairwise cost between the random vectors $\bm{x}_{\sigma}\sim\bm{\mu}_{\sigma}$ and $\bm{x}_{\sigma+1}\sim\bm{\mu}_{\sigma+1}$. 
As another example, when $\mathcal{G}$ is a \emph{star}, with one vertex $\mu_1$ called the \emph{barycenter} and all other vertices connected to and only to the barycenter, the entries of the ground cost tensor $\bm{C}$ are 
\begin{align}
[\bm{C}_{i_\mathcal{V}}] = \displaystyle\sum_{\sigma=2}^{s}[(C_{\sigma})_{i_1,i_{\sigma}}]\:,
\label{CforBarycenter}    
\end{align}
% \begin{align}
% \bm{C}\left(\bm{x}_{1},\hdots,\bm{x}_{s}\right) = \displaystyle\sum_{\sigma=2}^{s}c_{\sigma}\left(\bm{x}_{1},\bm{x}_{\sigma}\right),
% \label{CforBarycenter}    
% \end{align}
where $C_\sigma$ contains the pairwise costs between $\bm{x}_1\sim\bm{\mu}_1$ and $\bm{x}_{\sigma}\sim\bm{\mu}_\sigma$. Paths appear naturally in tracking problems while stars appear in information fusion.

For a general graph $\mathcal{G}=(\mathcal{V},\mathcal{E})$, 
\begin{align}
[\bm{C}_{i_\mathcal{V}}] = \displaystyle\sum_{(\sigma_1,\sigma_2)\in\mathcal{E}}[(C_{\sigma_1\sigma_2})_{i_{\sigma_1},i_{\sigma_2}}]\:.
\label{CforGraph}    
\end{align}
% \begin{align}
% \bm{C}\left(\bm{x}_{1},\hdots,\bm{x}_{s}\right) = \displaystyle\sum_{(\sigma_1,\sigma_2)\in\mathcal{E}}c_{(\sigma_1,\sigma_2)}\left(\bm{x}_{\sigma_1},\bm{x}_{\sigma_2}\right)\:.
% \label{CforGraph}    
% \end{align}

In general, $\mathcal{X}_{\sigma}:={\mathrm{support}}(\bm{\mu}_{\sigma})\subseteq\mathbb{R}^{d}$ need not be identical for all $\sigma\in\llbracket s\rrbracket$. Let 
\begin{align}
\bm{\mathcal{X}}:=\displaystyle\prod_{\sigma\in\llbracket s\rrbracket}\mathcal{X}_{\sigma} \subseteq\left(\mathbb{R}^{d}\right)^{\otimes s}.
\label{ProductGroundSpace}    
\end{align}
Then $\bm{C}:\bm{\mathcal{X}}\mapsto\mathbb{R}^{n_{1}}_{\geq 0}\otimes\hdots\otimes\mathbb{R}^{n_{s}}_{\geq 0}$. Likewise, the couplings $\bm{M}$ are supported on $\bm{\mathcal{X}}$. %{\red{Is this comment necessary?}}

With a fixed regularization parameter $\eta>0$, the MSB is the optimal coupling
\begin{align}
\bm{M}^{\mathrm{opt}}:=\underset{\bm{M}\in\Pi\left(\mathcal{V}\right)}{\arg\min}\quad\langle \bm{C}+\eta\log\bm{M},\bm{M}\rangle,
\label{defMopt}    
\end{align}
i.e., the minimizer of the entropy\footnote{Specifically the \emph{Shannon entropy} $H(\bm{M}):=-\langle\log\bm{M},\bm{M}\rangle$.}-regularized multimarginal optimal transport \cite{pass2015multi} problem over graph structure $\mathcal{G}$.

Letting $\bm{K}:=\exp\left(-\bm{C}/\eta\right)$ where $\exp$ acts elementwise, notice that \eqref{defMopt} can be expressed as
\begin{align}
\bm{M}^{\mathrm{opt}}=\underset{\bm{M}\in\Pi\left(\mathcal{V}\right)}{\arg\min}\,\eta\:
{\mathrm{D_{KL}}}\left(\bm{M}\parallel \bm{K}\right),%{\mathrm{D_{KL}}}\left(\bm{M}\parallel \bm{K}\left(\otimes_{\bm{\mu}\in\mathcal{V}}\bm{\mu}\right)\right),
\label{eq:minWKLdiv}    
\end{align}
where ${\mathrm{D_{KL}}}$ denotes the \emph{relative entropy} a.k.a. the \emph{Kullback-Leibler divergence}\footnote{By definition, ${\mathrm{D_{KL}}}(\bm{P}\parallel\bm{Q})=\langle\bm{P},\log\left(\bm{P}\oslash\bm{Q}\right)\rangle$.}.%, and $\bm{K}\left(\otimes_{\bm{\mu}\in\mathcal{V}}\bm{\mu}\right)$ is a Gibbs-weighted product measure. 
Re-writing \eqref{defMopt} in the form \eqref{eq:minWKLdiv} has two merits. First, it clarifies that the optimal coupling $\bm{M}^{\mathrm{opt}}$ is the \emph{most likely joint consistent with the given $s$ marginals}. Second, it explains the \emph{existence-uniqueness} of $\bm{M}^{\mathrm{opt}}$ as \eqref{eq:minWKLdiv} involves a strictly convex program: a Kullback-Leibler projection onto a convex polyhedron\footnote{intersection of the $s$ hyperplanes in \eqref{defProj} with a simplex in dimension $\prod_{\sigma\in\llbracket s\rrbracket}n_{\sigma}$.} $\Pi$.

Following standard nomenclature, the bimarginal ($s=2$) case is called the \emph{Schr\"{o}dinger bridge (SB)} and the term MSB is reserved for $s\geq 3$.

\noindent\textbf{Related works.} Recent interest in MSB is due to its growing use for learning population-level trajectories from snapshot data \cite{chen2023deep,shen2025multi}. Applications include sensor fusion \cite{elvander2020multi}, tracking an ensemble of agents \cite{haasler2021multimarginal}, image interpolation \cite{noble2023tree}, trajectory inference in single-cell RNA sequencing \cite{berlinghieri2025oh}, and learning computational resource usage \cite{bondar2024path,bondar2025stochastic}. For a control-theoretic exposition to MSB, see \cite{chen2019multi}.

The standard approach to solve \eqref{defMopt} (equivalently \eqref{eq:minWKLdiv}) is to use the convergent multimarginal Sinkhorn recursion \cite{benamou2015iterative,marino2020optimal,carlier2022linear} that computes
\begin{subequations}
\begin{align}
&\bm{u}_{\sigma} := \exp\left(-\bm{\lambda}_{\sigma}/\eta\right),\quad \bm{U} = \otimes_{\sigma\in\llbracket s\rrbracket}\bm{u}_{\sigma}, \label{defU}\\
& \boldsymbol{u}_{\sigma} \leftarrow \boldsymbol{u}_{\sigma} \odot \boldsymbol{\mu}_{\sigma} \oslash \operatorname{proj}_{\sigma}(\boldsymbol{K} \odot \boldsymbol{U}) \quad\forall \sigma\in \llbracket s\rrbracket. \label{MultiSinkRecursion}    
\end{align}
\label{MultimargSinkhornSolution}
\end{subequations}
In \eqref{defU}, the $\exp$ is elementwise, and the vectors $\bm{\lambda}_{\sigma}\in\mathbb{R}^{n_{\sigma}}$ are the Lagrange multipliers associated with the equality constraints in \eqref{defPi}. In \eqref{MultiSinkRecursion}, the symbols $\odot$ and $\oslash$ denote elementwise (Hadamard) multiplication and division, respectively. In particular, \eqref{MultimargSinkhornSolution} has guaranteed convergence with worst-case linear rate. The updates \eqref{MultiSinkRecursion} can be cyclic, greedy or randomized \cite{lin2022complexity} across the index $\sigma\in\llbracket s\rrbracket$.

The optimal primal variable $\bm{M}^{\mathrm{opt}}$ is then computed in terms of the converged $\bm{U}$ from the recursion \eqref{MultimargSinkhornSolution} as
\begin{align}
\bm{M}^{\mathrm{opt}}=\bm{K}\odot\bm{U}.
\label{MoptfromU}    
\end{align}

The main computational challenge in solving the MSB problems lies in evaluating the projection in \eqref{MultiSinkRecursion}, which in general is known \cite{lin2022complexity} to be exponential in $s$, the number of vertices. See also \cite{altschuler2023polynomial}. However, when $\mathcal{G}$ is a tree, this computational complexity becomes linear in $s$. This is because the projection for tree-structured $\mathcal{G}$ can be computed as message passing \cite{haasler2021multi} via belief propagation \cite{yedidia2003understanding}. 

For general connected $\mathcal{G}$ and $n_{\sigma}=n$ $\forall\sigma\in\llbracket s\rrbracket$, the work in \cite{fan2022complexity} derives a complexity $\widetilde{\mathcal{O}}\left({\mathrm{diam}}(\mathcal{T})sn^{w(\mathcal{G})+1}\varepsilon^{-2}\right)$ for an $\varepsilon$-accurate solution when $\mathcal{G}$ can be factored as a junction tree $\mathcal{T}$ with diameter ${\mathrm{diam}}(\mathcal{T})$ and treewidth\footnote{i.e., width of the junction tree decomposition} $w(\mathcal{G})$. 

In summary, existing MSB literature focuses on a fixed $\mathcal{G}$, and computational complexity depends on the structure of $\mathcal{G}$.  

\noindent\textbf{Optimal graph structure.} Different from the existing MSB literature, we think of $\mathcal{G}$ as a variable, and consider optimizing the MSB cost among all connected $\mathcal{G}$ over the $s$ given measure-valued vertices. 

Since $\bm{C}=\bm{C}\left(\mathcal{G}\right)$, $\bm{K}=\bm{K}\left(\mathcal{G}\right)$, $\bm{U}=\bm{U}\left(\mathcal{G}\right)$, and thus $\bm{M}^{\mathrm{opt}} = \bm{M}^{\mathrm{opt}}\left(\mathcal{G}\right)$, seeking the \emph{optimal graph structure} $\mathcal{G}^{\mathrm{opt}}$ amounts to solving the variational problem:
\begin{align}
\hspace*{-0.1in}&\mathcal{G}^{\mathrm{opt}} \!= \!\!\!\!\!\underset{\substack{\mathcal{G}\:\text{undirected connected}\\
\text{over vertices}\: \mathcal{V}}}{\arg\min}\!\!\langle\bm{C}(\mathcal{G})+\eta\log\bm{M}^{\mathrm{opt}}\left(\mathcal{G}\right),\bm{M}^{\mathrm{opt}}\left(\mathcal{G}\right)\rangle\nonumber\\
&=\underset{\substack{\mathcal{G}\:\text{undirected connected}\\
\text{over vertices}\: \mathcal{V}}}{\arg\min}\:\underset{\bm{M}\in\Pi\left(\mathcal{V}\right)}{\min}\langle \bm{C}(\mathcal{G})+\eta\log\bm{M},\bm{M}\rangle.
\label{OptimalGproblem}    
\end{align}
%Our intent in this work is to understand this variational problem. 
We refer to the \emph{optimal coupling} associated with $\mathcal{G}^{\mathrm{opt}}$ as the \emph{optimal MSB}. We focus on the MSB (i.e., $s\geq 3$) since for $s=2$, outer minimization in \eqref{OptimalGproblem} is trivial as then the line graph is the only possible connectivity.

\begin{figure}[t]
    \centering
\includegraphics[width=0.68\linewidth]{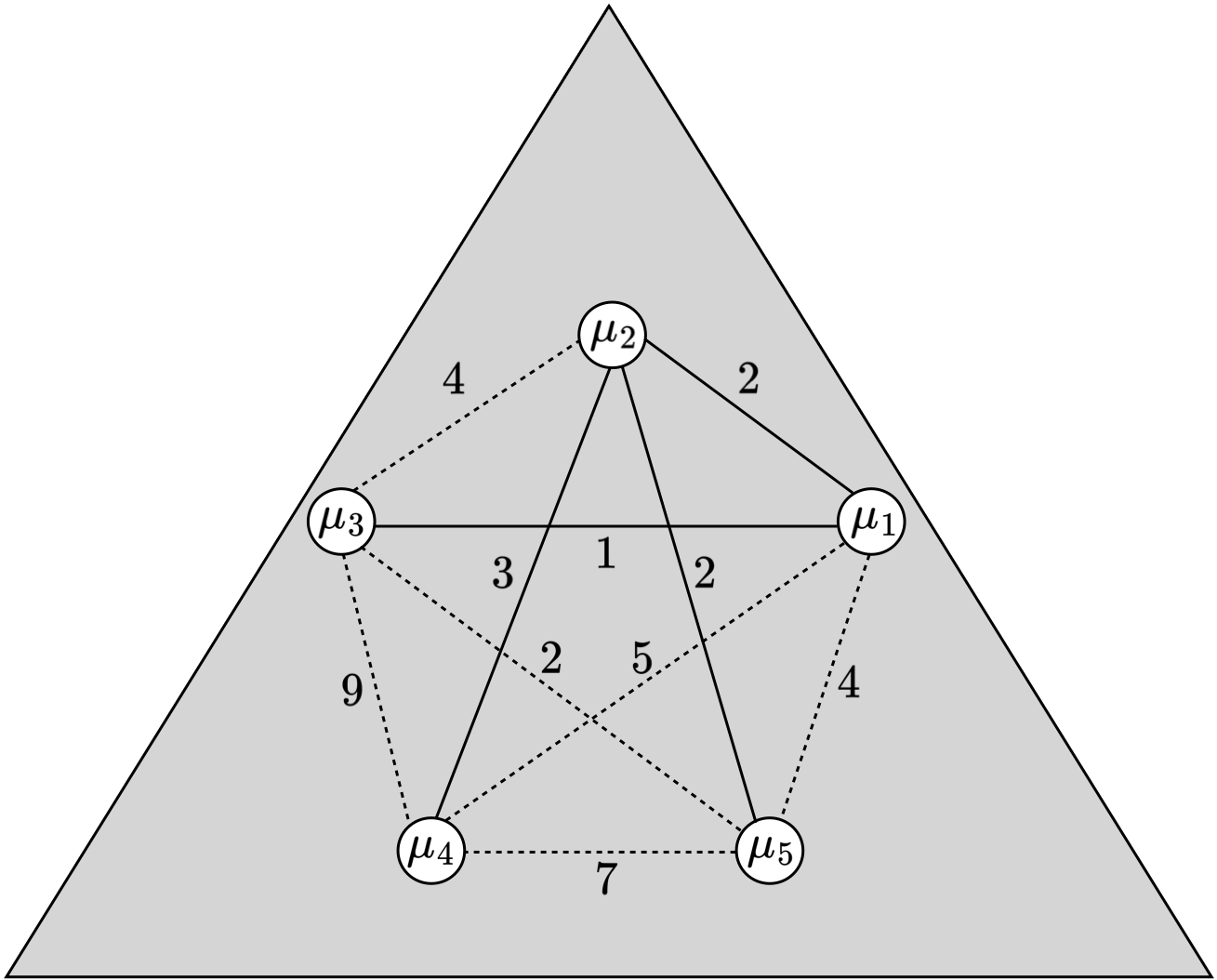}
    \caption{The MST (solid lines) of a complete weighted undirected graph with $s=5$ measure-valued vertices, where the measures have identical supports $\mathcal{X}_{\sigma}$, and $n_{\sigma} = 3$ $\forall\sigma\in\llbracket s=5\rrbracket$. The edge-weights are shown as numerical values along the edges. We will derive the edge weights in Sec. \ref{subsec:MSTconstrcution}. The edges not included in the MST are shown as dashed lines. The shaded triangle is the probability simplex $\Delta^{2}$. For non-identical $\mathcal{X}_{\sigma}$, the MST is inter-simplex instead of intra-simplex as shown here.}
    \vspace*{-0.2in}
    \label{fig:MSTonK5}
\end{figure}

\noindent\textbf{Motivation.}  In practice, the graph $\mathcal{G}$ may not be known \emph{a priori}. As a motivating example, consider different images of a natural disaster (e.g., wildfire) or a sports event taken at different and possibly unknown times, from different locations, pose, illumination, and a combination thereof. In such scenarios, the \emph{most-likely spatio-temporal reconstruction} can naturally be posed as an MSB with the caveat that the correlation graph structure and the amount of correlation need to be co-optimized.

% Also practical is the special case when additional prior knowledge is available that the correlation graph structure is a tree. F The tree-structured special case is interesting because our problem can then be interpreted as a minimum spanning tree problem over measure-valued vertices. This 

\noindent\textbf{Contributions.} Our specific contributions are twofold.
\begin{itemize}
\item We introduce the problem of finding the optimal MSB, i.e., the problem of finding the optimal connected graph structure w.r.t. the MSB cost over a given set of measure-valued vertices. We deduce that the optimal graph structure is a spanning tree, thereby arriving at a novel tree optimization problem \cite{magnanti1995optimal}.

\item We show that the problem can be solved in two steps: first by constructing a complete graph with the edge weights being a sum of the optimal values of the bimarginal SB between the corresponding vertices and the endpoint entropies, and then finding a minimum spanning tree (MST) over that complete graph (see Fig. \ref{fig:MSTonK5}). For $n_{\sigma}=n$ $\forall\sigma\in\llbracket s\rrbracket$, we are then able to solve \eqref{OptimalGproblem} with time complexity $\mathcal{O}\left(s^2 n^2\|\bm{C}\|_{\infty}^{2}\left(\log n\right)^{-1}/\eta^2\right)$ using standard MST algorithms such as Dijkstra-Jarník-Prim algorithm or Borůvka's algorithm \cite[Sec. 2]{pettie2002optimal}.
\end{itemize}

\noindent\textbf{Organization.} 
In Sec. \ref{sec:MST}, we explain that the $\mathcal{G}^{\mathrm{opt}}$ in \eqref{OptimalGproblem} is an MST w.r.t. the MSB cost. In Sec. \ref{sec:TwoStepSolution}, we show a tree decomposition property that makes computing the same tractable. Numerical results are reported in Sec. \ref{sec:NumericalResults} followed by concluding remarks in Sec. \ref{sec:Conclusion}.

%%%%%%%%%%%%%%%%%%%%%%%%%%%%%%%%%%%%%%%%%%%%%%

\section{Optimal MSB and Minimum Spanning Tree}\label{sec:MST}
We start with the following observation.
\begin{proposition}\label{prop:Goptisatree}
The $\mathcal{G}^{\rm{opt}}$ in \eqref{OptimalGproblem} is a spanning tree $\mathcal{T}^{\rm{opt}}$ of the complete graph over the $s$ vertices in $\mathcal{V}=\{\bm{\mu}_1, \hdots, \bm{\mu}_s\}$. 
\end{proposition}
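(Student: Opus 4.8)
The plan is to show that adding any extra edge beyond a spanning tree can only increase (weakly) the MSB cost, so an optimal connected graph can always be taken to be a spanning tree; since $\mathcal{G}$ is required to be connected over all $s$ vertices and a spanning tree is the minimal such structure, the optimum is attained at one. The key monotonicity fact to exploit is that the ground cost tensor in \eqref{CforGraph} is a \emph{sum} over edges of nonnegative matrices $C_{\sigma_1\sigma_2}$, hence $\mathcal{E}_1\subseteq\mathcal{E}_2$ implies $\bm{C}(\mathcal{G}_1)\leq\bm{C}(\mathcal{G}_2)$ entrywise, and therefore $\bm{K}(\mathcal{G}_1)\geq\bm{K}(\mathcal{G}_2)$ entrywise.

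First I would fix any connected $\mathcal{G}=(\mathcal{V},\mathcal{E})$ that is not a tree, pick a spanning tree $\mathcal{T}=(\mathcal{V},\mathcal{E}_{\mathcal{T}})$ with $\mathcal{E}_{\mathcal{T}}\subseteq\mathcal{E}$ (one exists since $\mathcal{G}$ is connected), and compare the two optimal values. Using the Kullback--Leibler form \eqref{eq:minWKLdiv}, the optimal value for $\mathcal{G}$ is $\eta\min_{\bm{M}\in\Pi(\mathcal{V})}{\mathrm{D_{KL}}}(\bm{M}\parallel\bm{K}(\mathcal{G}))$. Since the feasible set $\Pi(\mathcal{V})$ does not depend on $\mathcal{G}$, and since for a fixed $\bm{M}$ the map $\bm{K}\mapsto{\mathrm{D_{KL}}}(\bm{M}\parallel\bm{K})=\langle\bm{M},\log\bm{M}\rangle-\langle\bm{M},\log\bm{K}\rangle$ is monotonically nonincreasing in each entry of $\bm{K}$, the entrywise inequality $\bm{K}(\mathcal{T})\geq\bm{K}(\mathcal{G})$ gives ${\mathrm{D_{KL}}}(\bm{M}\parallel\bm{K}(\mathcal{T}))\leq{\mathrm{D_{KL}}}(\bm{M}\parallel\bm{K}(\mathcal{G}))$ for every feasible $\bm{M}$, and hence the same inequality after minimizing over $\bm{M}$. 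Equivalently, working directly with \eqref{defMopt}, for any feasible $\bm{M}$ we have $\langle\bm{C}(\mathcal{T}),\bm{M}\rangle\leq\langle\bm{C}(\mathcal{G}),\bm{M}\rangle$ because $\bm{M}\geq 0$ and $\bm{C}(\mathcal{T})\leq\bm{C}(\mathcal{G})$ entrywise, while the entropy term $\eta\langle\log\bm{M},\bm{M}\rangle$ is untouched; minimizing both sides over $\bm{M}\in\Pi(\mathcal{V})$ yields that the optimal MSB value for $\mathcal{T}$ is no larger than that for $\mathcal{G}$.

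It follows that the infimum in \eqref{OptimalGproblem} over all connected graphs equals the infimum restricted to spanning trees, and since there are finitely many spanning trees over $s$ vertices, the minimum is attained at some $\mathcal{T}^{\rm opt}$, which is in particular a spanning tree of the complete graph $K_s$ on $\mathcal{V}$. I expect the only subtlety — and the main thing to state carefully rather than a genuine obstacle — is the well-definedness of the comparison: one should note that $\Pi(\mathcal{V})$ is nonempty (it contains the independent coupling $\bigotimes_\sigma\bm{\mu}_\sigma$) and that the strictly convex program \eqref{eq:minWKLdiv} has a unique minimizer for each $\mathcal{G}$ (as already observed in the text following \eqref{eq:minWKLdiv}), so "optimal value" is meaningful; and one should be slightly careful that the support constraint from \eqref{ProductGroundSpace} is the same for every $\mathcal{G}$, which it is since $\bm{\mathcal{X}}$ depends only on the vertices. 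Everything else is the elementary edge-monotonicity argument above.
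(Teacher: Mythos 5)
Your proof is correct and follows essentially the same route as the paper's: both exploit that the ground cost \eqref{CforGraph} is an entrywise sum over edges, so deleting edges can only decrease $\bm{C}$ entrywise while the feasible set $\Pi(\mathcal{V})$ and the entropy term are unchanged. The only difference is cosmetic --- the paper argues by contradiction, removing a single cycle edge from $\mathcal{G}^{\rm{opt}}$ and asserting a \emph{strict} decrease of the cost (which tacitly assumes the removed edge's cost entries are positive), whereas you compare against a full spanning subtree with weak inequalities and conclude attainment at a tree, which is arguably the cleaner formulation since it needs no positivity assumption.
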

\begin{proof}
As $\mathcal{G}^{\rm{opt}}$ is connected, it is sufficient to show that $\mathcal{G}^{\rm{opt}}$ does not contain cycles.

Assume instead that $\mathcal{G}^{\rm{opt}}=(\mathcal{V},\mathcal{E})$ contains a cycle. Then there exists an edge connecting $(\sigma',\sigma'')\in\mathcal{V}\times\mathcal{V}$ which may be removed without breaking connectivity. Call the graph with this edge removed as $\mathcal{G}'$. It follows from \eqref{CforGraph} that
$$ [\bm{C}(\mathcal{G}')_{i_\mathcal{V}}] = \!\!\!\!\sum_{(\sigma_1,\sigma_2)\in\mathcal{E}\setminus(\sigma',\sigma'')}[(C_{\sigma_1\sigma_2})_{i_{\sigma_1},i_{\sigma_2}}] < [\bm{C}(\mathcal{G}^{\rm{opt}})_{i_\mathcal{V}}].$$
So the objective value of \eqref{OptimalGproblem} will be lower for $\mathcal{G}'$, a contradiction.
\end{proof}
As a consequence of Proposition \ref{prop:Goptisatree}, it suffices to consider the $\arg\min$ in \eqref{OptimalGproblem} over \emph{the set of spanning trees of the complete graph over the $s$ vertices $\bm{\mu}_1, \hdots, \bm{\mu}_s$}. In other words, solving \eqref{OptimalGproblem} amounts to finding an MST w.r.t. the MSB cost. This problem in itself is interesting because it opens up the possibility to generalize the Euclidean MST \cite{shamos1975closest,steele1988growth,agarwal1990euclidean} and their applications to situations where only the statistics of the vertex locations, as opposed to their exact locations, are available.  

However, finding the optimal tree-structured MSB is nontrivial since by Cayley's theorem \cite{cayley1878theorem}, the number of spanning trees over $s$ labeled vertices is $s^{s-2}$. So it is impractical to first solve all the corresponding MSBs and then determining the minimizing tree structure. In Sec. \ref{sec:TwoStepSolution}, we will show a tree decomposition property enabling tractable solution. 

At this point, it is natural to wonder if $\mathcal{T}^{\rm{opt}}$ might be of special kind such as a \emph{path}. For $s=3$, this is indeed the case since all spanning trees over three vertices are paths. For $s>3$, the MST need not be a path, as shown in Example \ref{Example:PathsNotMST}.

\begin{example}\label{Example:PathsNotMST}
Consider the $s=5$ Dirac delta measures shown in Fig. \ref{fig:subopt_pathtree} with the Euclidean ground cost. The only feasible coupling is a Dirac measure that equals to the product of the given Dirac measures. So the entropy term in \eqref{defMopt} vanishes and it is easy to verify that $\mathcal{T}^{\rm{opt}}$ is a star--not a path--with solid edges shown in Fig. \ref{fig:subopt_pathtree}. 
\end{example}

\begin{figure}
    \centering
    \includegraphics[width=0.5\linewidth]{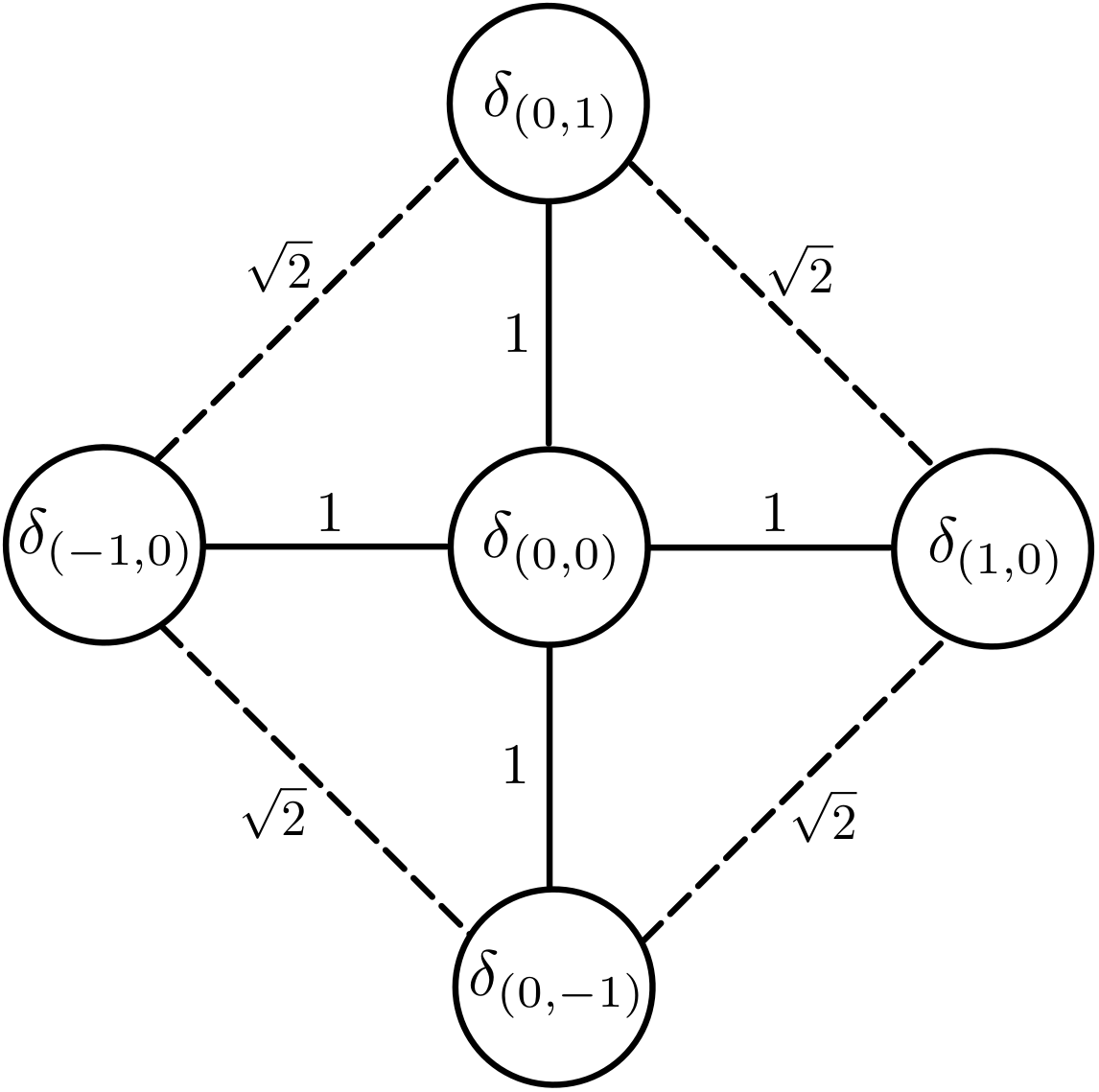}
    \caption{With the $s=5$ Dirac delta measures $\mu_{\sigma}=\delta_{\bm{x}_{\sigma}}$, $\bm{x}_{\sigma}\in\mathbb{R}^2\:\forall\sigma\in\llbracket s\rrbracket$ as above, and with the Euclidean distance as the ground cost, the $\mathcal{T}^{\rm{opt}}$ (with solid edges) is a star.}
    \vspace*{-0.2in}
    \label{fig:subopt_pathtree}
\end{figure}

%%%%%%%%%%%%%%%%%%%%%%%%%%%%%%%%%%%%%%%%%%%%%%
%%%%%%%%%%%%%%%%%%%%%%%%%%%%%%%%%%%%%%%%%%%%%%

\section{Solution of the Optimal MSB Problem}\label{sec:TwoStepSolution}
As in the classical MST problem, the intractability of exhaustive search motivates constructive or greedy algorithms (e.g., Dijkstra-Jarník-Prim or Borůvka's algorithm \cite[Sec. 2]{pettie2002optimal}), which sequentially build the MST edge-by-edge. Such approaches are enabled by the property that the edge weights are known and additive. However, it is by no means obvious that such a property holds for our problem \eqref{OptimalGproblem}. In this Section we establish an analogous decomposition property. We explain how this property helps to express \eqref{OptimalGproblem} as an instance of the classical MST problem.

%%%%%%%%%%%%%%%%%%%%%%%%%%%%%%%%%%%%%%%%%%%%%%

\subsection{Tree Decomposition}\label{subsec:treedecomposition}
Given a tree $\mathcal{T}=(\mathcal{V},\mathcal{E})$ over the vertex set $\mathcal{V}=\{\bm{\mu}_\sigma\}_{\sigma\in\llbracket s\rrbracket}$, we would like to evaluate the objective of \eqref{OptimalGproblem} -- i.e., the tree-structured MSB -- as a sum over the tree's constituent edges. A key result of Haasler et al. \cite[Prop. 3.4]{haasler2021multimarginal} allows for the `cutting' of such trees into independent subtrees. In the following proposition, we establish its constructive version, which makes this decomposition explicit.
%a constructive version of this result. 

\begin{proposition}
\label{prop:indep_bimarginal_M}
    Let $\mathcal{T}=(\mathcal{V},\mathcal{E})$ be a tree as above, and let $\bm{\mu}_\sigma\in\mathcal{V}$ be a non-leaf which splits $\mathcal T$ into two subtrees $\mathcal{T}_1=(\mathcal{V}_1,\mathcal{E}_1)$ and $\mathcal{T}_2=(\mathcal{V}_2,\mathcal{E}_2)$, joined at $\bm{\mu}_\sigma$. Let
    \begin{equation}\label{eq:minWKLdiv_tree}
        \bm{M}^{\rm{opt}}_{\mathcal{T}} := \underset{\bm{M}\in\Pi(\mathcal{V})}{\arg\min}\:\:\eta\:   {\mathrm{D_{KL}}}\left(\bm{M}\parallel \bm{K}\right).%{\mathrm{D_{KL}}}\left(\bm{M}\parallel \bm{K}\left(\otimes_{\bm{\mu}\in\mathcal{V}}\bm{\mu}\right)\right).
    \end{equation}
    Then, we have the decomposition
\begin{equation}\label{eq:M_decomp}
        [\left(\bm{M}^{\rm{opt}}_{\mathcal{T}}\right)_{i_\mathcal{V}}] = \frac{[(\bm{M}^{\rm{opt}}_{\mathcal{T}_1})_{i_{\mathcal{V}_1}}][(\bm{M}^{\rm{opt}}_{\mathcal{T}_2})_{i_{\mathcal{V}_2}}]}{(\bm{\mu}_\sigma)_{i_\sigma}}.
    \end{equation}
    Furthermore, for $k\in\{1,2\}$, letting 
    \begin{align}
[(\bm{K}_k)_{i_{\mathcal{V}_k}}] := \!\!\!\!\prod_{(\sigma_1,\sigma_2)\in\mathcal{E}_k}\!\!\!\!\!\!\!\exp(-[(C_{\sigma_1,\sigma_2})_{i_{\sigma_1},i_{\sigma_2}}]/\eta), 
    \label{ComponentsOfKk}    
    \end{align}
    we have $ [\bm{K}_{i_{\mathcal{V}}}] = [(\bm{K}_1)_{i_{\mathcal{V}_1}}]\cdot[(\bm{K}_2)_{i_{\mathcal{V}_2}}]$, and
    \begin{align}\label{eq:KL_decomp}
        &{\mathrm{D_{KL}}}\!\left(\bm{M}^{\rm{opt}}_{\mathcal{T}}\!\parallel\! \bm{K}\right) \!=\! \!\sum_{k=1,2}\!{\mathrm{D_{KL}}}\!\left(\bm{M}^{\rm{opt}}_{\mathcal{T}_k}\!\parallel\! \bm{K}_k\right) + H(\bm{\mu}_\sigma). % \text{const.}
    \end{align}
% \begin{align}\label{eq:KL_decomp}
%         &{\mathrm{D_{KL}}}\left(\bm{M}^{\rm{opt}}_{\mathcal{T}}\!\parallel\! \bm{K}\left(\otimes_{\bm{\mu}\in\mathcal{V}}\bm{\mu}\right)\right) \!=\! {\mathrm{D_{KL}}}\left(\bm{M}^{\rm{opt}}_{\mathcal{T}_1}\!\parallel\! \bm{K}_1\left(\otimes_{\bm{\mu}\in\mathcal{V}_1}\bm{\mu}\right)\right) \nonumber\\
%         &\quad+ {\mathrm{D_{KL}}}\left(\bm{M}^{\rm{opt}}_{\mathcal{T}_2}\parallel \bm{K}_2\left(\otimes_{\bm{\mu}\in\mathcal{V}_2}\bm{\mu}\right)\right)
%         + H(\bm{\mu}_\sigma). % \text{const.}
%     \end{align}
\end{proposition}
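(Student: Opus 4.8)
The plan is to derive all three claims --- the kernel factorization $[\bm{K}_{i_\mathcal{V}}]=[(\bm{K}_1)_{i_{\mathcal{V}_1}}]\,[(\bm{K}_2)_{i_{\mathcal{V}_2}}]$, the coupling factorization \eqref{eq:M_decomp}, and the relative-entropy identity \eqref{eq:KL_decomp} --- from the Gibbs (Sinkhorn) form of the tree-structured optimizer. By \eqref{MoptfromU} together with \eqref{defU} we may write $\bm{M}^{\rm{opt}}_{\mathcal{T}}=\bm{K}\odot\bm{U}$ with $\bm{U}=\bigotimes_{\tau\in\mathcal{V}}\bm{u}_\tau$ a vertex-factored scaling tensor whose factors are entrywise positive on the relevant supports, and $\bm{K}=\exp(-\bm{C}/\eta)>\bm 0$ with $\bm{C}$ edge-decoupled as in \eqref{CforGraph}. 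Deleting the non-leaf $\bm{\mu}_\sigma$ partitions the edge set disjointly, $\mathcal{E}=\mathcal{E}_1\sqcup\mathcal{E}_2$, so by \eqref{CforGraph} the tensor $[\bm{C}_{i_\mathcal{V}}]$ is the sum of an $\mathcal{E}_1$-part depending only on $i_{\mathcal{V}_1}$ and an $\mathcal{E}_2$-part depending only on $i_{\mathcal{V}_2}$; exponentiating gives $[\bm{K}_{i_\mathcal{V}}]=[(\bm{K}_1)_{i_{\mathcal{V}_1}}]\,[(\bm{K}_2)_{i_{\mathcal{V}_2}}]$ with $\bm{K}_k$ as in \eqref{ComponentsOfKk}. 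The substantive step, and the one I expect to be the main obstacle, is to show that the marginal of $\bm{M}^{\rm{opt}}_{\mathcal{T}}$ onto the vertices $\mathcal{V}_k$ equals the sub-problem optimizer $\bm{M}^{\rm{opt}}_{\mathcal{T}_k}$ --- the constructive content underlying \cite[Prop. 3.4]{haasler2021multimarginal}. Once that is established, \eqref{eq:M_decomp} and \eqref{eq:KL_decomp} are bookkeeping.

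For that step I would write $[(\bm{M}^{\rm{opt}}_{\mathcal{T}})_{i_\mathcal{V}}]$ componentwise as $[(\bm{K}_1)_{i_{\mathcal{V}_1}}]\,[(\bm{K}_2)_{i_{\mathcal{V}_2}}]\,(u_\sigma)_{i_\sigma}$ times the vertex factors $\prod_{\tau\in\mathcal{V}_1\setminus\{\sigma\}}(u_\tau)_{i_\tau}$ and $\prod_{\tau\in\mathcal{V}_2\setminus\{\sigma\}}(u_\tau)_{i_\tau}$, then sum out the indices $i_{\mathcal{V}_2\setminus\{\sigma\}}$. Because $\mathcal{T}$ is acyclic, those indices occur only inside $\bm{K}_2$ and inside the scaling vectors attached to $\mathcal{V}_2\setminus\{\sigma\}$, so the sum separates and collapses to a function of $i_\sigma$ alone, $(v_\sigma)_{i_\sigma}:=\sum_{i_{\mathcal{V}_2\setminus\{\sigma\}}}[(\bm{K}_2)_{i_{\mathcal{V}_2}}]\prod_{\tau\in\mathcal{V}_2\setminus\{\sigma\}}(u_\tau)_{i_\tau}$, giving
\[
\operatorname{proj}_{\mathcal{V}_1}\!\big(\bm{M}^{\rm{opt}}_{\mathcal{T}}\big)=\bm{K}_1\odot\Big(\bigotimes_{\tau\in\mathcal{V}_1\setminus\{\sigma\}}\bm{u}_\tau\Big)\otimes\big(\bm{u}_\sigma\odot\bm{v}_\sigma\big),
\]
a Gibbs scaling of $\bm{K}_1$ carrying one positive scaling vector per vertex of $\mathcal{V}_1$. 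Its $\tau$-marginal for each $\tau\in\mathcal{V}_1$ equals that of $\bm{M}^{\rm{opt}}_{\mathcal{T}}$, namely $\bm{\mu}_\tau$, so it lies in $\Pi(\mathcal{V}_1)$; by the existence-uniqueness characterization of the Kullback-Leibler projection onto $\Pi(\mathcal{V}_1)$ --- a feasible point in Gibbs form relative to the kernel satisfies the optimality conditions and hence, by strict convexity, is the unique minimizer of ${\mathrm{D_{KL}}}(\cdot\parallel\bm{K}_1)$ --- it must coincide with $\bm{M}^{\rm{opt}}_{\mathcal{T}_1}$. The symmetric computation gives $\operatorname{proj}_{\mathcal{V}_2}(\bm{M}^{\rm{opt}}_{\mathcal{T}})=\bm{M}^{\rm{opt}}_{\mathcal{T}_2}$, with the analogous scalar $(w_\sigma)_{i_\sigma}:=\sum_{i_{\mathcal{V}_1\setminus\{\sigma\}}}[(\bm{K}_1)_{i_{\mathcal{V}_1}}]\prod_{\tau\in\mathcal{V}_1\setminus\{\sigma\}}(u_\tau)_{i_\tau}$.

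To finish, multiply the two Gibbs forms: the product equals $[(\bm{M}^{\rm{opt}}_{\mathcal{T}})_{i_\mathcal{V}}]$ times the leftover scalar $(u_\sigma)_{i_\sigma}(v_\sigma)_{i_\sigma}(w_\sigma)_{i_\sigma}$; but marginalizing $\bm{M}^{\rm{opt}}_{\mathcal{T}}$ all the way onto $\bm{\mu}_\sigma$ produces exactly $(u_\sigma)_{i_\sigma}(v_\sigma)_{i_\sigma}(w_\sigma)_{i_\sigma}$, which by $\bm{M}^{\rm{opt}}_{\mathcal{T}}\in\Pi(\mathcal{V})$ equals $(\bm{\mu}_\sigma)_{i_\sigma}$, so dividing yields \eqref{eq:M_decomp} (with the convention $0/0=0$ where $(\bm{\mu}_\sigma)_{i_\sigma}=0$, both sides then vanishing). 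For \eqref{eq:KL_decomp} I would substitute \eqref{eq:M_decomp} together with $\bm{K}=\bm{K}_1\bm{K}_2$ into ${\mathrm{D_{KL}}}(\bm{M}^{\rm{opt}}_{\mathcal{T}}\parallel\bm{K})=\langle\bm{M}^{\rm{opt}}_{\mathcal{T}},\log(\bm{M}^{\rm{opt}}_{\mathcal{T}}\oslash\bm{K})\rangle$, so the logarithm splits as $\log(\bm{M}^{\rm{opt}}_{\mathcal{T}_1}\oslash\bm{K}_1)+\log(\bm{M}^{\rm{opt}}_{\mathcal{T}_2}\oslash\bm{K}_2)-\log\bm{\mu}_\sigma$; pairing each summand with $\bm{M}^{\rm{opt}}_{\mathcal{T}}$ and summing out the indices not present in it --- using the two marginal identifications of the previous paragraph --- turns the first two summands into ${\mathrm{D_{KL}}}(\bm{M}^{\rm{opt}}_{\mathcal{T}_1}\parallel\bm{K}_1)$ and ${\mathrm{D_{KL}}}(\bm{M}^{\rm{opt}}_{\mathcal{T}_2}\parallel\bm{K}_2)$, while the last becomes $-\langle\bm{\mu}_\sigma,\log\bm{\mu}_\sigma\rangle=H(\bm{\mu}_\sigma)$ since the $\sigma$-marginal of $\bm{M}^{\rm{opt}}_{\mathcal{T}}$ is $\bm{\mu}_\sigma$. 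Beyond this bookkeeping, the only delicate points are the acyclicity of $\mathcal{T}$ (which makes each marginalization factor cleanly across the two subtrees) and the entrywise positivity of the scaling vectors on the supports (which legitimizes the division in \eqref{eq:M_decomp}).
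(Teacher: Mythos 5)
Your proof is correct, but it takes a genuinely different route from the paper's. The paper argues variationally: it writes any (implicitly Markov) coupling in the chain-rule form $[\bm{M}_{i_{\mathcal{V}}}]=[({\rm proj}_{\mathcal{V}_1}\bm{M})_{i_{\mathcal{V}_1}}]\,[({\rm proj}_{\mathcal{V}_2}\bm{M})_{i_{\mathcal{V}_2}}]/(\bm{\mu}_\sigma)_{i_\sigma}$, expands ${\mathrm{D_{KL}}}(\bm{M}\parallel\bm{K})$ into a separable sum of the two subtree divergences plus $H(\bm{\mu}_\sigma)$, rewrites $\Pi(\mathcal{V})$ in terms of the two projections, and lets the decoupled minimization deliver \eqref{eq:M_decomp} and \eqref{eq:KL_decomp} simultaneously. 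You instead work forward from the Gibbs/Sinkhorn representation \eqref{MoptfromU}: factor the kernel across $\mathcal{E}_1\sqcup\mathcal{E}_2$, marginalize the vertex-scaled form onto each $\mathcal{V}_k$ so that the summed-out block collapses into a single positive scaling vector at $\bm{\mu}_\sigma$, and invoke the uniqueness of the Kullback--Leibler projection to identify ${\rm proj}_{\mathcal{V}_k}(\bm{M}^{\rm opt}_{\mathcal{T}})$ with $\bm{M}^{\rm opt}_{\mathcal{T}_k}$; the identity $(u_\sigma)(v_\sigma)(w_\sigma)=\bm{\mu}_\sigma$ then yields \eqref{eq:M_decomp}, and \eqref{eq:KL_decomp} follows by substitution. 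Each approach buys something: yours is self-contained on the point the paper leaves implicit --- the chain-rule factorization used in the paper's first display is \emph{not} valid for arbitrary $\bm{M}\in\Pi(\mathcal{V})$ but only for couplings with the conditional-independence (Markov) structure across $\bm{\mu}_\sigma$, and your derivation proves rather than assumes that the optimizer has this structure --- whereas the paper's argument avoids any appeal to duality or to the existence of the scaling vectors, needing only convexity of the decoupled program \eqref{eq:minWKLdiv_tree_decoupled}. The one dependency you take on is the exact Gibbs form $\bm{M}^{\rm opt}_{\mathcal{T}}=\bm{K}\odot\bigotimes_\tau\bm{u}_\tau$ with positive factors, which holds here because $\bm{K}=\exp(-\bm{C}/\eta)$ is elementwise positive so strong duality applies; it is worth stating that justification explicitly rather than citing the Sinkhorn limit \eqref{MoptfromU} alone.
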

\begin{proof}
% We begin by defining the conditional distributions
% $$ [(\bm{M}_1)_{i_{\mathcal{V}_1}}] := \frac{[{\rm{proj}}_{\mathcal{V}_1}(\bm{M})_{i_{\mathcal{V}_1}}]}{(\mu_\sigma)_{i_\sigma}} \quad\text{and}\quad [(\bm{M}_2)_{i_{\mathcal{V}_2}}] := \frac{[{\rm{proj}}_{\mathcal{V}_2}(\bm{M})_{i_{\mathcal{V}_2}}]}{(\mu_\sigma)_{i_\sigma}} $$
% It then follows that
%Recall that any $\bm{M}$ can be viewed as a joint distribution over the marginals $\mathcal{V}$. 

For any $\bm{M}\in\Pi\left(\mathcal{V}\right)$, by chain rule of probability, 
\begin{align*}
    [\bm{M}_{i_{\mathcal{V}}}] &= \underbrace{\frac{[{\rm{proj}}_{\mathcal{V}_1}(\bm{M})_{i_{\mathcal{V}_1}}]}{(\bm{\mu}_\sigma)_{i_\sigma}}}_{:=[(\bm{M}_1)_{i_{\mathcal{V}_1}}]}\cdot(\bm{\mu}_\sigma)_{i_\sigma}\cdot\underbrace{\frac{[{\rm{proj}}_{\mathcal{V}_2}(\bm{M})_{i_{\mathcal{V}_2}}]}{(\bm{\mu}_\sigma)_{i_\sigma}}}_{:=[(\bm{M}_2)_{i_{\mathcal{V}_2}}]}. % \\
    % [\bm{M}_{i_{\mathcal{V}}}] &= \frac{[{\rm{proj}}_{\mathcal{V}_1}(\bm{M})_{i_{\mathcal{V}_1}}]\cdot[{\rm{proj}}_{\mathcal{V}_2}(\bm{M})_{i_{\mathcal{V}_2}}]}{(\mu_\sigma)_{i_\sigma}} % \\
    % &= [(\bm{M}_1)_{i_{\mathcal{V}_1}}]\cdot(\mu_\sigma)_{i_\sigma}\cdot[(\bm{M}_2)_{i_{\mathcal{V}_2}}]
\end{align*}
Now, letting $\mathcal{S}_1:=\mathcal{V}_1\setminus\{\bm{\mu}_\sigma\}$ and $\mathcal{S}_2:=\mathcal{V}_2\setminus\{\bm{\mu}_\sigma\}$, we have
\begin{align*}
    % &{\mathrm{D_{KL}}}(\bm{M} \parallel \bm{K}\left(\otimes_{\bm{\mu}\in\mathcal{V}}\bm{\mu}\right) ) := \sum_{i_{\mathcal{V}}} [\bm{M}_{i_{\mathcal{V}}}]\log\frac{[\bm{M}_{i_{\mathcal{V}}}]}{[\bm{K}_{i_{\mathcal{V}}}]} {\red{+ [\bm{M}_{i_{\mathcal{V}}}] - [\bm{K}_{i_{\mathcal{V}}}]}} \\
    &{\mathrm{D_{KL}}}(\bm{M} \parallel \bm{K} ) := \sum_{i_{\mathcal{V}}} [\bm{M}_{i_{\mathcal{V}}}]\log\frac{[\bm{M}_{i_{\mathcal{V}}}]}{[\bm{K}_{i_{\mathcal{V}}}]} \\
    &= \sum_{i_{\mathcal{S}_1}}\sum_{i_{\sigma1}}(\bm{\mu}_\sigma)_{i_\sigma}[(\bm{M}_2)_{i_{\mathcal{V}_2}}]\!\cdot\!\log\!\frac{[(\bm{M}_1)_{i_{\mathcal{V}_1}}](\bm{\mu}_\sigma)_{i_\sigma}[(\bm{M}_2)_{i_{\mathcal{V}_2}}]}{[(\bm{K}_1)_{i_{\mathcal{V}_1}}][(\bm{K}_2)_{i_{\mathcal{V}_2}}]} \\
    &= \!\!\sum_{i_\sigma}(\bm{\mu}_\sigma)_{i_\sigma}\!\log(\bm{\mu}_\sigma)_{i_\sigma}\!\!\underbrace{\sum_{i_{\mathcal{S}_1}}[(\bm{M}_1)_{i_{\mathcal{V}_1}}]}_{=1}\!\underbrace{\sum_{i_{\mathcal{S}_2}}[(\bm{M}_2)_{i_{\mathcal{V}_2}}]}_{=1} \!+\alpha_1 \!+\! \alpha_2
    % &\quad + \underbrace{\sum_{i_\sigma}(u_\sigma)_{i_\sigma}\sum_{i_{S_1}}[(\bm{M}_1)_{i_{\mathcal{V}_1}}]\log[(\bm{M}_1)_{i_{\mathcal{V}_1}}]-\sum_{i_\sigma}(u_\sigma)_{i_\sigma}\sum_{i_{S_1}}[(\bm{M}_1)_{i_{\mathcal{V}_1}}]\log[(\bm{K}_1)_{i_{\mathcal{V}_1}}]}_{:=(A)} \\
    % &\quad + \underbrace{\sum_{i_\sigma}(u_\sigma)_{i_\sigma}\sum_{i_{S_2}}[(\bm{M}_2)_{i_{\mathcal{V}_2}}]\log[(\bm{M}_2)_{i_{\mathcal{V}_2}}]-\sum_{i_\sigma}(u_\sigma)_{i_\sigma}\sum_{i_{S_2}}[(\bm{M}_2)_{i_{\mathcal{V}_2}}]\log[(\bm{K}_2)_{i_{\mathcal{V}_2}}]}_{:=(B)}
\end{align*}
where for $k=\{1,2\}$, 
\begin{align*}
    \alpha_k &:= \sum_{i_\sigma}(\bm{\mu}_\sigma)_{i_\sigma}\sum_{i_{S_k}}[(\bm{M}_k)_{i_{\mathcal{V}_k}}]\log\frac{[(\bm{M}_k)_{i_{\mathcal{V}_k}}]}{[(\bm{K}_k)_{i_{\mathcal{V}_k}}]} \\
    % &= \sum_{i_\sigma}(\mu_\sigma)_{i_\sigma}\sum_{i_{S_1}}\frac{[{\rm{proj}}_{\mathcal{V}_1}(\bm{M})_{i_{\mathcal{V}_1}}]}{(\mu_\sigma)_{i_\sigma}}\Bigg(\log\frac{[{\rm{proj}}_{\mathcal{V}_1}(\bm{M})_{i_{\mathcal{V}_1}}]}{[(\bm{K}_1)_{i_{\mathcal{V}_1}}]}-\log(\mu_\sigma)_{i_\sigma}\Bigg) \\
    &= \sum_{i_{\mathcal{V}_k}}[{\rm{proj}}_{\mathcal{V}_k}(\bm{M})_{i_{\mathcal{V}_k}}]\log\frac{[{\rm{proj}}_{\mathcal{V}_k}(\bm{M})_{i_{\mathcal{V}_k}}]}{[(\bm{K}_k)_{i_{\mathcal{V}_k}}]} \\
    &\;- \sum_{i_\sigma}(\bm{\mu}_\sigma)_{i_\sigma}\log(\bm{\mu}_\sigma)_{i_\sigma}\sum_{i_{S_k}}[(\bm{M}_k)_{i_{\mathcal{V}_k}}] \\
    &\!\!\!\!\!\!\!\!\!\!\!\!\!\!= {\mathrm{D_{KL}}}({\rm{proj}}_{\mathcal{V}_k}\!(\bm{M}) \!\parallel\! \bm{K}_k ) \!-\! \sum_{i_\sigma}(\bm{\mu}_\sigma)_{i_\sigma}\!\log(\bm{\mu}_\sigma)_{i_\sigma}.
\end{align*}
% and by an analogous calculation we have
% $$ (B) = {\mathrm{D_{KL}}}({\rm{proj}}_{\mathcal{V}_2}(\bm{M}) \parallel \bm{K}_2 ) - \sum_{i_\sigma}(\mu_\sigma)_{i_\sigma}\log(\mu_\sigma)_{i_\sigma} $$
Thus by rewriting the constraint set of \eqref{eq:minWKLdiv_tree} as
$$ \Pi(\mathcal{V}) = \Bigg\{ \bm{M}\in(\mathbb{R}_{\geq 0}^n)^{\otimes s} \:\bigg\rvert\: \begin{aligned}
    {\rm{proj}}_{\mathcal{V}_1}(\bm{M}) &\in \Pi(\mathcal{V}_1),\\
    {\rm{proj}}_{\mathcal{V}_2}(\bm{M}) &\in \Pi(\mathcal{V}_2)
\end{aligned}  \Bigg\}\:, $$
the minimization \eqref{eq:minWKLdiv_tree} becomes
\begin{equation}\label{eq:minWKLdiv_tree_decoupled}
    \underset{\substack{{\rm{proj}}_{\mathcal{V}_1}(\bm{M})\in\Pi(\mathcal{V}_1) \\ {\rm{proj}}_{\mathcal{V}_2}(\bm{M})\in\Pi(\mathcal{V}_2)}}{\arg\min} \eta\!\left(\begin{aligned}
        & {\mathrm{D_{KL}}}({\rm{proj}}_{\mathcal{V}_1}\!(\bm{M}) \!\parallel\! \bm{K}_1) \\
        & + {\mathrm{D_{KL}}}({\rm{proj}}_{\mathcal{V}_2}\!(\bm{M}) \!\parallel\! \bm{K}_2) + H(\bm{\mu}_\sigma)
    \end{aligned}\right)\:.
    % \:\:\eta\:{\mathrm{D_{KL}}}\left(\bm{M}\parallel \bm{K}\left(\otimes_{\bm{\mu}\in\mathcal{V}}\bm{\mu}\right)\right).
\end{equation}
Note that the objective of \eqref{eq:minWKLdiv_tree_decoupled} is a separable sum, allowing the decoupling of the minimization. Herefrom both \eqref{eq:M_decomp} and \eqref{eq:KL_decomp} follow.
% from which \eqref{eq:KL_decomp} follows, and \eqref{eq:M_decomp} follows from rewriting the constraint set
% {\red{TODO: a bit more detail here.}}
\end{proof}

Proposition \ref{prop:indep_bimarginal_M} allows for the solution of a given tree-structured MSBP as the combination (by \eqref{eq:M_decomp}) of its solution over any two subtrees which split the original tree. \noindent The following is an important consequence of Proposition \ref{prop:indep_bimarginal_M}.

\begin{corollary}\label{prop:tree_split_to_edges}
    With $\mathcal{T}$ as in Proposition \ref{prop:indep_bimarginal_M},
    we have
\begin{equation}\label{eq:M_decomp_pairwise}
        [(\bm{M}^{\rm{opt}}_{\mathcal{T}})_{i_1,\dots,i_s}] = \frac{\prod_{(\sigma_1,\sigma_2)\in\mathcal{E}}[(M_{\sigma_1\sigma_2}^{\rm{opt}})_{i_{\sigma_1},i_{\sigma_2}}]}{\prod_{\sigma\in\llbracket s\rrbracket}(\bm{\mu}_\sigma)_{i_\sigma}^{\deg(\bm{\mu}_\sigma)-1}}\:,
    \end{equation}
    where ${\mathrm{deg}}$ denotes the degree of a vertex. Furthermore, let
\begin{align}
{\emph{\texttt{SB}}}_{\eta}\left(\bm{\mu}_{\sigma_{1}},\bm{\mu}_{\sigma_{2}}\right)\!:=\! {\mathrm{D_{KL}}}\!\left({M}^{\rm{opt}}_{\sigma_1\sigma_2}\!\parallel\! {K}_{\sigma_1\sigma_2}\right)
%{\mathrm{D_{KL}}}\!\left({M}^{\rm{opt}}_{\sigma_1\sigma_2}\!\parallel\! {K}_{\sigma_1\sigma_2}(\bm{\mu}_{\sigma_1}\otimes\bm{\mu}_{\sigma_2})\right)
\label{defSBoptval}    
\end{align}
denote the (scaled) optimal value for the bimarginal SB problem. Then,
    \begin{align}
        % &{\mathrm{D_{KL}}}\left(\bm{M}^{\rm{opt}}_{\mathcal{T}}\!\parallel \!\bm{K}\!\left(\otimes_{\bm{\mu}\in\mathcal{V}}\bm{\mu}\right)\right) \!= \!\! \nonumber\\
        &{\mathrm{D_{KL}}}\left(\bm{M}^{\rm{opt}}_{\mathcal{T}}\!\parallel \!\bm{K}\right) \!= \!\sum_{(\sigma_1,\sigma_2)\in\mathcal{E}}\!\!\!\!{\emph{\texttt{SB}}}_{\eta}\left(\bm{\mu}_{\sigma_{1}},\bm{\mu}_{\sigma_{2}}\right)\nonumber\\
        &\qquad\qquad\qquad+ \!\!\!\!\sum_{\sigma\in\llbracket s\rrbracket}\!\!(\deg(\bm{\mu}_\sigma)-1)H(\bm{\mu}_\sigma). \label{eq:KL_decomp_pairwise}
    \end{align}
    \if 0
    \begin{align}
        &{\mathrm{D_{KL}}}\!\!\left(\!\!\bm{M}^{\rm{opt}}_{\mathcal{T}}\!\!\parallel \!\!\bm{K}\!\!\left(\otimes_{\bm{\mu}\in\mathcal{V}}\bm{\mu}\!\right)\!\right) \!&=\!\!\!\!\!\!\!\!\! \sum_{(\sigma_1,\sigma_2)\in\mathcal{E}}\!\!\!\!\!\!\!\!{\mathrm{D_{KL}}}\!\!\left({M}^{\rm{opt}}_{\sigma_1\sigma_2}\!\!\parallel\!\! {K}_{\sigma_1\sigma_2}\!(\bm{\mu}_{\sigma_1}\!\!\otimes\!\bm{\mu}_{\sigma_2}\!)\!\right) \nonumber\\
        % {\mathrm{D_{KL}}}\left(\bm{M}^{\rm{opt}}_{\mathcal{T}}\parallel \bm{K}\left(\otimes_{k\in\mathcal{V}}\bm{\mu}_k\right)\right) = \sum_{(\sigma_1,\sigma_2)\in\mathcal{E}}{\mathrm{D_{KL}}}\left({M}^{\rm{opt}}_{\sigma_1\sigma_2}\parallel {K}_{\sigma_1\sigma_2}\left({\mu}_{\sigma_1}\otimes\mu_{\sigma_2}\right)\right)
        &\!\!\!\!\!\!\!\!\!\!\sum_{(\sigma_1,\sigma_2)\in\mathcal{E}}\!\!\!\!{\emph{\texttt{SB}}}_{\eta}\left(\bm{\mu}_{\sigma_{1}},\bm{\mu}_{\sigma_{2}}\right) + \sum_{\sigma\in\llbracket s\rrbracket}(\deg(\bm{\mu}_\sigma)-1)H(\bm{\mu}_\sigma)\:. \label{eq:KL_decomp_pairwise}
    \end{align}
    \fi
\end{corollary}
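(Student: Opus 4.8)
The plan is to prove \eqref{eq:M_decomp_pairwise} and \eqref{eq:KL_decomp_pairwise} together by induction on the number of vertices $s=|\mathcal{V}|$, using Proposition \ref{prop:indep_bimarginal_M} as the recursive step: repeatedly splitting a tree at a non-leaf vertex eventually reduces it to single edges, and the bookkeeping only has to track how the degree exponents and entropy terms accumulate. Throughout I write $\deg_{\mathcal{T}'}(\bm{\mu}_\tau)$ for the degree of $\bm{\mu}_\tau$ within a subtree $\mathcal{T}'$. For the base case $s=2$, the tree $\mathcal{T}$ is the single edge $(\bm{\mu}_1,\bm{\mu}_2)$, so $\bm{M}^{\rm opt}_{\mathcal{T}}=M^{\rm opt}_{12}$ and $\bm{K}=K_{12}$; both exponents $\deg(\bm{\mu}_\sigma)-1$ vanish, so \eqref{eq:M_decomp_pairwise} collapses to an identity and \eqref{eq:KL_decomp_pairwise} reduces to ${\mathrm{D_{KL}}}(M^{\rm opt}_{12}\parallel K_{12})={\emph{\texttt{SB}}}_\eta(\bm{\mu}_1,\bm{\mu}_2)$, which is definition \eqref{defSBoptval}.

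For the inductive step I would take $s\geq3$, assume the claim for every tree on fewer than $s$ vertices, and use the fact that every tree on $\geq3$ vertices has a non-leaf vertex $\bm{\mu}_\sigma$. Splitting $\mathcal{T}$ at $\bm{\mu}_\sigma$ gives subtrees $\mathcal{T}_1=(\mathcal{V}_1,\mathcal{E}_1)$ and $\mathcal{T}_2=(\mathcal{V}_2,\mathcal{E}_2)$ as in Proposition \ref{prop:indep_bimarginal_M}, with $\mathcal{V}_1\cup\mathcal{V}_2=\mathcal{V}$, $\mathcal{V}_1\cap\mathcal{V}_2=\{\bm{\mu}_\sigma\}$, $\mathcal{E}_1\sqcup\mathcal{E}_2=\mathcal{E}$ (no edge is shared, since $\bm{\mu}_\sigma$ is the only common vertex), and $2\leq|\mathcal{V}_k|\leq s-1$, so the induction hypothesis applies to both $\mathcal{T}_k$. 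The degree identities to use are $\deg_{\mathcal{T}}(\bm{\mu}_\tau)=\deg_{\mathcal{T}_k}(\bm{\mu}_\tau)$ for $\tau\neq\sigma$ (with $k$ the subtree containing $\bm{\mu}_\tau$) and $\deg_{\mathcal{T}}(\bm{\mu}_\sigma)=\deg_{\mathcal{T}_1}(\bm{\mu}_\sigma)+\deg_{\mathcal{T}_2}(\bm{\mu}_\sigma)$.

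I would then substitute the inductive forms of $\bm{M}^{\rm opt}_{\mathcal{T}_1}$ and $\bm{M}^{\rm opt}_{\mathcal{T}_2}$ into \eqref{eq:M_decomp}: the two edge-products merge into the single product over $\mathcal{E}$; each marginal $\bm{\mu}_\tau$ with $\tau\neq\sigma$ retains exponent $\deg_{\mathcal{T}}(\bm{\mu}_\tau)-1$ in the denominator; and $\bm{\mu}_\sigma$ picks up exponent $(\deg_{\mathcal{T}_1}(\bm{\mu}_\sigma)-1)+(\deg_{\mathcal{T}_2}(\bm{\mu}_\sigma)-1)+1=\deg_{\mathcal{T}}(\bm{\mu}_\sigma)-1$ once the extra factor $1/(\bm{\mu}_\sigma)_{i_\sigma}$ appearing in \eqref{eq:M_decomp} is accounted for, giving \eqref{eq:M_decomp_pairwise}. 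The identical accounting through \eqref{eq:KL_decomp} and \eqref{ComponentsOfKk} gives \eqref{eq:KL_decomp_pairwise}: the two sums of ${\emph{\texttt{SB}}}_\eta$ over $\mathcal{E}_1$ and $\mathcal{E}_2$ combine into the sum over $\mathcal{E}$, the entropy terms for $\tau\neq\sigma$ carry over verbatim, and at $\bm{\mu}_\sigma$ the contributions $(\deg_{\mathcal{T}_1}(\bm{\mu}_\sigma)-1)H(\bm{\mu}_\sigma)$, $(\deg_{\mathcal{T}_2}(\bm{\mu}_\sigma)-1)H(\bm{\mu}_\sigma)$ and the extra $H(\bm{\mu}_\sigma)$ from \eqref{eq:KL_decomp} sum to $(\deg_{\mathcal{T}}(\bm{\mu}_\sigma)-1)H(\bm{\mu}_\sigma)$.

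The part I expect to require the most care is precisely this degree accounting at the split vertex --- checking that the lone extra $(\bm{\mu}_\sigma)_{i_\sigma}^{-1}$ (respectively the lone extra $+H(\bm{\mu}_\sigma)$) returned by Proposition \ref{prop:indep_bimarginal_M} is exactly what upgrades $(\deg_{\mathcal{T}_1}(\bm{\mu}_\sigma)-1)+(\deg_{\mathcal{T}_2}(\bm{\mu}_\sigma)-1)$ to $\deg_{\mathcal{T}}(\bm{\mu}_\sigma)-1$; everything else is routine regrouping of finite products and sums, which I would keep terse. I would also note that the left-hand sides of \eqref{eq:M_decomp_pairwise}--\eqref{eq:KL_decomp_pairwise} do not depend on which non-leaf $\bm{\mu}_\sigma$ is used, nor on how its incident edges are partitioned between $\mathcal{T}_1$ and $\mathcal{T}_2$, so the recursion can be organized however is convenient; for instance, always taking $\mathcal{T}_1$ to be a single leaf edge turns the argument into a one-edge-at-a-time peeling of $\mathcal{T}$ whose base case is the $s=2$ computation above.
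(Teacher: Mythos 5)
Your proposal is correct and is exactly the paper's argument: the paper's entire proof of this corollary is the one-line statement that it ``follows by recursive application of Proposition \ref{prop:indep_bimarginal_M},'' and your induction on $s$ with the degree bookkeeping at the split vertex is precisely the detailed version of that recursion. The key check --- that the extra $(\bm{\mu}_\sigma)_{i_\sigma}^{-1}$ and the extra $+H(\bm{\mu}_\sigma)$ from the split convert $(\deg_{\mathcal{T}_1}(\bm{\mu}_\sigma)-1)+(\deg_{\mathcal{T}_2}(\bm{\mu}_\sigma)-1)$ into $\deg_{\mathcal{T}}(\bm{\mu}_\sigma)-1$ --- is carried out correctly.
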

\begin{proof}
    Follows by recursive application of Proposition \ref{prop:indep_bimarginal_M}.
\end{proof}

In the following, we use Corollary \ref{prop:tree_split_to_edges} to design a tractable algorithm for solving problem \eqref{OptimalGproblem}. 

%%%%%%%%%%%%%%%%%%%%%%%%%%%%%%%%%%%%%%%%%%%%%%

\subsection{Construction of MST}\label{subsec:MSTconstrcution}
Corollary \ref{prop:tree_split_to_edges} allows for the decomposition of a tree along its constituent edges -- we may solve the MSB over a \emph{given} tree by solving the bimarginal SBPs over all edges of that tree and combining the solutions by \eqref{eq:M_decomp_pairwise}. Similarly, we may evaluate the cost of that tree by \eqref{eq:KL_decomp_pairwise} \emph{without} reconstructing $\bm{M}^{\rm{opt}}_{\mathcal{T}}$.

To instead \emph{construct} the optimal tree, we must get rid of the $\deg(\cdot)$ term in \eqref{eq:KL_decomp_pairwise}, which requires knowledge of the complete tree structure. With this aim, $\forall(\sigma_1,\sigma_2)\in\mathcal{E}$, we let 
\begin{align}
    &g_{\sigma_1\sigma_2} := {{\texttt{SB}}}_{\eta}\left(\bm{\mu}_{\sigma_{1}},\bm{\mu}_{\sigma_{2}}\right) + H(\bm{\mu}_{\sigma_1}) + H(\bm{\mu}_{\sigma_2}),    \label{eq:defgsigma}
\end{align}
% \begin{align}
%     &g_{\sigma_1\sigma_2} :={\mathrm{D_{KL}}}\left({M}^{\rm{opt}}_{\sigma_1\sigma_2}\parallel {K}_{\sigma_1\sigma_2}(\bm{\mu}_{\sigma_1}\otimes\bm{\mu}_{\sigma_2})\right)\nonumber\\
%     &\qquad\qquad+H(\bm{\mu}_{\sigma_1}) + H(\bm{\mu}_{\sigma_2}), \quad \forall(\sigma_1,\sigma_2)\in\mathcal{E},    \label{eq:defgsigma}
% \end{align}
and rewrite \eqref{eq:KL_decomp_pairwise} as
\begin{equation}\label{eq:KL_decomp_pairwise_independent}
{\mathrm{D_{KL}}}\left(\bm{M}^{\rm{opt}}_{\mathcal{T}}\parallel \bm{K}\right) = \!\!\!\!\!\sum_{(\sigma_1,\sigma_2)\in\mathcal{E}}g_{\sigma_1\sigma_2} - \sum_{\sigma\in\llbracket s\rrbracket} H(\bm{\mu}_\sigma).
\end{equation}
% \begin{equation}\label{eq:KL_decomp_pairwise_independent}
% {\mathrm{D_{KL}}}\left(\bm{M}^{\rm{opt}}_{\mathcal{T}}\parallel \bm{K}\left(\otimes_{\bm{\mu}\in\mathcal{V}}\bm{\mu}\right)\right) = \!\!\!\!\!\sum_{(\sigma_1,\sigma_2)\in\mathcal{E}}g_{\sigma_1\sigma_2} - \sum_{\sigma\in\llbracket s\rrbracket} H(\bm{\mu}_\sigma).
% \end{equation}
As $\sum H(\bm{\mu}_\sigma)$ is independent of the tree structure, we may take $g_{\cdot\cdot}$ as the truly additive `costs' of our edges. 
Notice from \eqref{eq:defgsigma} that $g_{\sigma_1\sigma_2}$ is symmetric in $\bm{\mu}_{\sigma_1},\bm{\mu}_{\sigma_2}$. %and $\geq 0$ for all $(\sigma_1,\sigma_2)\in\mathcal{E}$ because both ${\mathrm{D_{KL}}}(\cdot\parallel\cdot)$ and $H(\cdot)$ are nonnegative. 

We then rewrite our the optimal MSB problem \eqref{OptimalGproblem} as 
\begin{align}
&\mathcal{T}^{\mathrm{opt}} =\underset{\mathcal{E}\subset\mathcal{V}\times\mathcal{V}}{\arg\min}\:\sum_{(\sigma_1,\sigma_2)\in\mathcal{E}}g_{\sigma_1\sigma_2}.
\label{OptimalTreeProblem}    
\end{align}
Recall that by Proposition \ref{prop:Goptisatree}, the minimizer of \eqref{OptimalTreeProblem} must indeed be a tree. Thus \eqref{OptimalGproblem} $\equiv$ \eqref{OptimalTreeProblem} is \emph{exactly} the MST problem over the complete graph on $s$ vertices $\mathcal{V}$, with $g_{\cdot\cdot}$ defining the costs of the $s(s-1)/2$ unique edges.

Building on these results, we propose Algorithm \ref{alg:MSBasMST} to solve our original problem \eqref{OptimalGproblem}. 

\begin{algorithm}
\caption{Optimal MSB as an MST Problem}\label{alg:MSBasMST}
\begin{algorithmic}
\Require Distribution set $\mathcal{V} \gets \{\bm{\mu}_\sigma\}_{\sigma\in\llbracket s\rrbracket}$, edge set $\mathcal{E}$ of the complete graph over $s$ vertices, ground cost function $c$, entropic regularization parameter $\eta>0$, bimarginal Sinkhorn algorithm \texttt{AlgSINK}, MST algorithm \texttt{AlgMST}.\newline

\For{$(\sigma_1,\sigma_2)\in\mathcal{E}$}
% $[(C_{\sigma_1\sigma_2})_{i_1,i_2}] \gets c(\bm{x}_{i_1},\bm{x}_{i_2})\:\forall \bm{x}_1\in\mathcal{X}_1$
\State $C_{\sigma_1\sigma_2} \gets c(\bm{x}_{\sigma_1},\bm{x}_{\sigma_2})\:\forall(\bm{x}_{\sigma_1},\bm{x}_{\sigma_2})\in\mathcal{X}_{\sigma_1}\times\mathcal{X}_{\sigma_2}$ %\Comment{Edgewise ground cost matrix}
% \State {compute optimal value for bimarginal SB}
% \State Compute SB cost ${\mathrm{D_{KL}}}\left({M}^{\rm{opt}}_{\sigma_1\sigma_2}\parallel {K}_{\sigma_1\sigma_2}(\bm{\mu}_{\sigma_1}\otimes\bm{\mu}_{\sigma_2})\right)$
\State ${{\texttt{SB}}}_{\eta}\left(\bm{\mu}_{\sigma_{1}},\bm{\mu}_{\sigma_{2}}\right) \gets \texttt{AlgSINK}(C_{\sigma_1\sigma_2},\eta,\bm{\mu}_{\sigma_1},\bm{\mu}_{\sigma_2})$
\State $g_{\sigma_1\sigma_2} \gets \eqref{eq:defgsigma}$ \Comment{edge weights}
\EndFor
\State $\mathcal{T}^{\rm{opt}} \gets \texttt{AlgMST}(\mathcal{V},\mathcal{E},\{g_{\sigma_1\sigma_2}\}_{(\sigma_1,\sigma_2)\in\mathcal{V}\times\mathcal{V}})$
\end{algorithmic}
\end{algorithm}

Notice that Algorithm \ref{alg:MSBasMST} requires two subroutines: \texttt{AlgSINK} (bimarginal Sinkhorn recursion\footnote{This is recursion \eqref{MultimargSinkhornSolution} with $s=2$.} to solve the SB problem) and \texttt{AlgMST} (standard MST algorithm such as Dijkstra-Jarník-Prim algorithm or Borůvka's algorithm \cite[Sec. 2]{pettie2002optimal}). The \textbf{for} loop in Algorithm \ref{alg:MSBasMST} constructs the edge weight matrix $g_{\sigma_1\sigma_2}$, and with those edge weights, we compute the MST for the complete weighted graph over $\mathcal{V}$. 

% The following Theorem summarizes our development.
% We summarize our above development in the theorem.
We state Theorem \ref{thm:finalresult} as a summary of the above.

\begin{mycolortheorem}\label{thm:finalresult}
Let the tree $\mathcal{T}^{\mathrm{opt}}$ be the output of Algorithm \ref{alg:MSBasMST} for input $\mathcal{V}:=\{\bm{\mu}_\sigma\}_{\sigma\in\llbracket s\rrbracket}$. Then $\mathcal{T}^{\mathrm{opt}}$ solves the optimal MSB problem, i.e., $\mathcal{T}^{\mathrm{opt}}=\mathcal{G}^{\mathrm{opt}}$ in problem \eqref{OptimalGproblem}. 
\end{mycolortheorem}

\begin{remark}[Chow-Liu tree] The proposed Algorithm \ref{alg:MSBasMST} has structural similarity with the Chow-Liu algorithm \cite{chow1968approximating} for optimal approximation of an arbitrary discrete probability measure by a product of second-order (conditional and marginal) measures, or equivalently approximating a Markov random field by a first-order dependency tree, w.r.t. the ${\mathrm{D_{KL}}}$ loss. The Chow-Liu algorithm solves this problem by constructing a complete weighted graph with edge weights being pairwise mutual information \cite[Ch. 2.3]{cover1991elements},
%\footnote{The mutual information between marginals $\bm{\mu}_{\sigma_1},\bm{\mu}_{\sigma_2}$ with joint $M_{\sigma_1\sigma_2}$ is ${\mathrm{D_{KL}}}\left(M_{\sigma_1\sigma_2}\parallel(\bm{\mu}_{\sigma_1}\otimes\bm{\mu}_{\sigma_2})\right)\geq 0$, which is symmetric in $\bm{\mu}_{\sigma_1},\bm{\mu}_{\sigma_2}$.}
and then computing the maximum weight spanning tree. Apart for being the solution to a different problem, our edge weights comprise $g_{\sigma_1\sigma_2}$ in \eqref{eq:defgsigma}, and we compute the minimum (not maximum) weight spanning tree. 
\end{remark}

\begin{remark}[Existence-uniqueness of $\mathcal{T}^{\mathrm{opt}}$] For problem \eqref{OptimalGproblem}, the existence of $\mathcal{T}^{\mathrm{opt}}$ is guaranteed since every non-empty finite set (here a set of spanning trees of cardinality $s^{s-2}$) contains the extrema. Akin to a general MST, if all edge weights (here $g_{\sigma_1\sigma_2}$) are unique, then $\mathcal{T}^{\mathrm{opt}}$ is unique.      
\end{remark}

\begin{remark}[Parallelization] Since computing $g_{\sigma_1\sigma_2}$ for one edge is decoupled from the other, the \textbf{for} loop in Algorithm \ref{alg:MSBasMST} can be parallelized. 
\end{remark}

%%%%%%%%%%%%%%%%%%%%%%%%%%%%%%%%%%%%%%%%%%%%%%

\subsection{Computational Complexity for Algorithm \ref{alg:MSBasMST}}\label{subsec:complexity}
For simplicity, let us consider $n_{\sigma}=n$ $\forall\sigma\in\llbracket s\rrbracket$. The complexity of Algorithm \ref{alg:MSBasMST} comprises those of its two steps: constructing the edge weights, and solving the MST.

The complexity analyses\footnote{These results only assume $\bm{C}\geq 0$ elementwise.} for solving an instance of the bimarginal SB via Sinkhorn recursion (i.e., \eqref{MultimargSinkhornSolution} with $s=2$) consider the notion of $\varepsilon$-accurate solution: a coupling $\widehat{\bm{M}}\in\Pi(\mathcal{V})$ is called \emph{$\varepsilon$-accurate} for some $\varepsilon>0$ if $\langle\bm{C},\widehat{\bm{M}}\rangle \leq \langle\bm{C},\bm{M}^{\mathrm{opt}}\rangle + \varepsilon$.
% \begin{align}
% \langle\bm{C},\widehat{\bm{M}}\rangle \leq \langle\bm{C},\bm{M}^{\mathrm{opt}}\rangle + \varepsilon.
% \label{epsilonaccuracy}
% \end{align}
The best known \cite{dvurechensky2018computational} complexity is $\mathcal{O}\left(n^2\|\bm{C}\|_{\infty}^{2}\log n/\varepsilon^2\right)$, i.e., $\widetilde{O}\left(n^2/\varepsilon^2\right)$, which improves upon the earlier \cite{altschuler2017near} $\mathcal{O}\left(n^2\|\bm{C}\|_{\infty}^{3}\log n/\varepsilon^3\right)$, i.e., $\widetilde{O}\left(n^2/\varepsilon^3\right), \eta= \varepsilon/(4\log n)$. Since there are $s(s-1)/2$ edges in a complete graph, the complexity for solving the bimarginal SBs for all edges is $\mathcal{O}\left(s^2 n^2\|\bm{C}\|_{\infty}^{2}\log n/\varepsilon^2\right)$, which remains the dominant complexity in computing \eqref{eq:defgsigma} for all edges.

For the weighted complete graph thus constructed, the complexity of computing the MST via the Dijkstra-Jarník-Prim algorithm or Borůvka's algorithm is $\mathcal{O}\left(s^2\right)$ \cite[Sec. 5]{fredman1987fibonacci}, \cite[Sec. 2.1]{eppstein2000spanning}. We note here that  Kruskal's algorithm \cite[Ch. 23.2]{cormenbook3rded}--another common algorithm for computing the MST--has a larger $\mathcal{O}\left(s^2\log s\right)$ complexity for complete graphs due to its sorting of all edge weights.

Therefore, the total complexity for Algorithm \ref{alg:MSBasMST} is $\mathcal{O}\left(s^2 n^2\|\bm{C}\|_{\infty}^{2}\log n/\varepsilon^2\right)$. In terms of the regularizer $\eta$, this complexity is $\mathcal{O}\left(s^2 n^2\|\bm{C}\|_{\infty}^{2}\left(\log n\right)^{-1}/\eta^2\right)$.

%%%%%%%%%%%%%%%%%%%%%%%%%%%%%%%%%%%%%%%%%%%%%%

\begin{figure}[t]
    \centering
\includegraphics[width=0.95\linewidth]{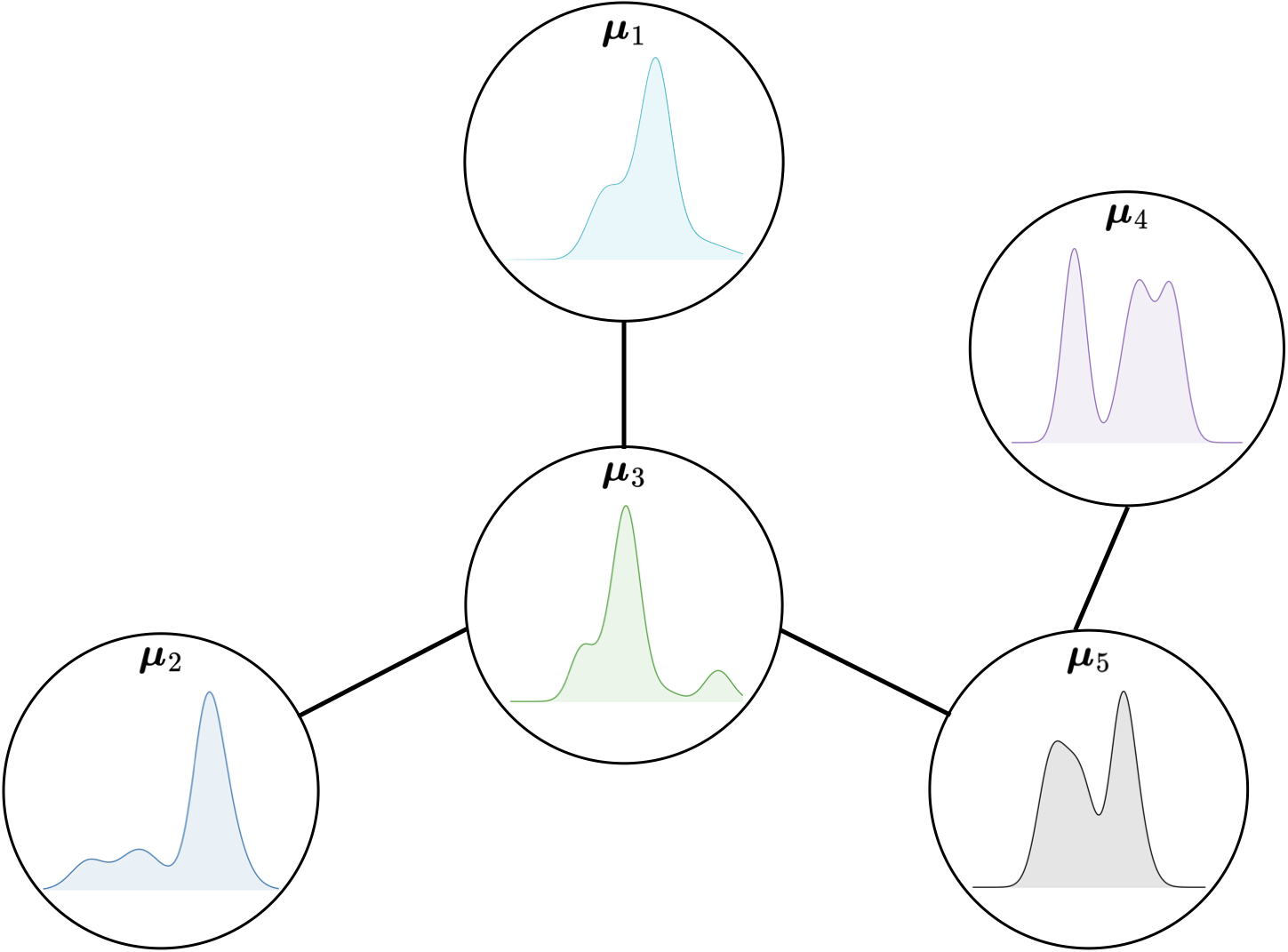}
    \caption{The optimal MSB graph structure for the numerical experiment in Sec. \ref{subsec:GMMs}. The densities for the measure-valued vertices are shown inside the circles.}
    \vspace*{-0.2in}
    \label{fig:GMM}
\end{figure}

\section{Numerical Results}\label{sec:NumericalResults}
To demonstrate our solution of the optimal MSBP, outlined in Sec. \ref{sec:TwoStepSolution}, we perform two numerical experiments. In the first, we compare the results and performance of Algorithm \ref{alg:MSBasMST} to na\"ive computation of spanning trees over a small set of Gaussian mixture-sampled vertices. In the second, we apply Algorithm \ref{alg:MSBasMST} for event reconstruction over video frames. All experiments were done using MATLAB R2024b on a Debian 12 Linux machine with an AMD Ryzen 7 5800X CPU. For MST computation with the Dijkstra-Jarník-Prim algorithm, we used the MATLAB command \texttt{minspantree(G,'Method','dense')}.

\subsection{Optimal MSB over Gaussian Mixtures}\label{subsec:GMMs}
We take $n=25$ samples from each of the $s=5$ Gaussian mixtures supported on $[-10,10]$ in Fig. \ref{fig:GMM}, to form our vertices $\{\bm{\mu}_\sigma\}_{\sigma\in\llbracket 5\rrbracket}$. We then solve for the optimal MSB first by constructing all $s^{s-2}=125$ possible spanning trees and comparing their MSB costs (as in the objective of \eqref{OptimalGproblem}), and then by construction from pairwise SB as in Algorithm \ref{alg:MSBasMST}. The computed MST shown in Fig. \ref{fig:GMM} has Pr\"{u}fer code: $3\:3\:5$, and is not a path. Table \ref{tab:GMMtrees_comp} shows the comparison of the $10$ lowest-cost spanning trees on these measure-valued vertices.

\begin{table}[h]
\centering
\begin{tabular}{| c | c | c |} 
 \hline
 ~Pr\"ufer code for $\mathcal{T}$~ & Cost \eqref{OptimalGproblem} & Cost \eqref{eq:KL_decomp_pairwise_independent}\\
 \hline\hline
 \:\:3\:3\:5\:\: & \:\:0.279922072756287\:\: & \:\:0.279921946258293\:\: \\
 \hline
 3\:3\:4 & 0.295777099784325 & 0.295776598462776 \\
 \hline
 3\:3\:3 & 0.316798945466359 & 0.316798407793688 \\ 
 \hline
 5\:3\:5 & 0.317890033393032 & 0.317889978780085 \\
\hline
 3\:5\:5 & 0.319564634628009 & 0.319562860510066 \\
\hline
 4\:3\:5 & 0.320487973256691 & 0.320487070929787 \\
\hline
 3\:2\:5 & 0.325540126207581 & 0.325538361972965 \\
\hline
 5\:3\:4 & 0.333745066643530 & 0.333744630984569 \\
\hline
 3\:5\:4 & 0.335419414386392 & 0.335417512714549 \\
\hline
 4\:3\:4 & 0.336342669176185 & 0.336341723134270 \\
\hline
\end{tabular}
\caption{{\small{Pr\"ufer codes and MSB costs of the $10$ lowest-cost trees out of the total $125$ trees, computed both globally and edgewise.}}}
\label{tab:GMMtrees_comp}
\vspace*{-0.1in}
\end{table}

\begin{figure*}[t]
    \centering
\includegraphics[width=0.9\linewidth]{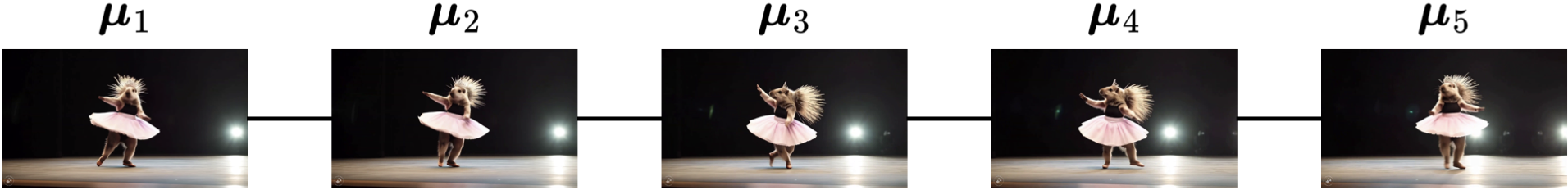}
    \caption{The optimal MSB graph structure for the numerical experiment in Sec. \ref{subsec:video}.}
    \vspace*{-0.2in}
    \label{fig:procupine}
\end{figure*}

We note that the proposed Algorithm \ref{alg:MSBasMST} took $\approx0.226$ seconds to construct the optimal tree, along with all other possible trees. Solving over each possible spanning tree, however, took $\approx3$ minutes per tree. Table \ref{tab:GMMtrees_comp} shows that both approaches found the same MST, and the corresponding tree costs match well.

\subsection{Spatio-temporal Reconstruction}\label{subsec:video}
To demonstrate that the proposed formulation and its solution can enable most-likely spatio-temporal reconstruction, we generated $s=5$ frames from the generative AI video \texttt{Porcupine}\footnote{URL: \url{https://www.youtube.com/watch?v=cCjegaOq2hQ}} generated by prompt: ``A porcupine wearing a tutu, performing a ballet dance on a stage" using the Movie Gen \cite{moviegenresearchpaperURL} cast of foundation models. We downsampled these frames to $128\times128$ pixels, and used them without timestamps\footnote{The timestamps for the frames are $\{0.166,0.233,0.333,0.666,0.866\}$ seconds, for $\sigma=1,2,3,4,5$, respectively.} as the vertices $\{\bm{\mu}_{\sigma}\}_{\sigma\in\llbracket 5\rrbracket}$. Because of joint spatio-temporal correlation among these frames, the optimal MSB graph structure is not obvious only from these snapshots. 

Despite the intentional deletion of timestamps, Algorithm \ref{alg:MSBasMST} applied to vertex data $\{\bm{\mu}_{\sigma}\}_{\sigma\in\llbracket 5\rrbracket}$ found the optimal MSB graph structure to be the path tree shown in Fig. \ref{fig:procupine}, i.e., the path is in the same sequence as frame capture. In our platform mentioned earlier, Algorithm \ref{alg:MSBasMST} incurred $\approx37$ minutes computational time for this data.

%%%%%%%%%%%%%%%%%%%%%%%%%%%%%%%%%%%%%%%%%%%%%%

\section{Conclusion}\label{sec:Conclusion}
We formulate the optimal MSB problem and show its equivalence to finding an MST over a given set of measure-valued vertices. We then derive a tree decomposition result and using it, propose an algorithm to compute this MST. We report two numerical experiments to illustrate our results.

%%%%%%%%%%%%%%%%%%%%%%%%%%%%%%%%%%%%%%%%%%%%%%

\bibliographystyle{IEEEtran}
\bibliography{references}

\end{document}